\def \TRkeywords{Hash Function Learning, Codeword, Support Vector Machine}
\begin{document}

\maketitle

\ifMakeReviewDraft
	\linenumbers
\fi

\begin{abstract}
 
In this paper we introduce a novel hash learning framework that has two main distinguishing features, when compared to past approaches. First, it utilizes codewords in the Hamming space as ancillary means to accomplish its hash learning task. These codewords, which are inferred from the data, attempt to capture similarity aspects of the data's hash codes. Secondly and more importantly, the same framework is capable of addressing supervised, unsupervised and, even, semi-supervised hash learning tasks in a natural manner. A series of comparative experiments focused on content-based image retrieval highlights its performance advantages. \footnote{This work has been accepted by ECML/PKDD 2015. Please cite the ECML version of this paper.}

\end{abstract}

\vskip 0.5in
\noindent
{\bf Keywords:} \TRkeywords

\acresetall

\section{Introduction}
\label{sec:Introduction}

With the explosive growth of web data including documents, images and videos, content-based image retrieval (CBIR) has attracted plenty of attention over the past years \cite{Datta2008}. Given a query sample, a typical CBIR scheme retrieves samples stored in a database that are most similar to the query sample. The similarity is gauged in terms of a pre-specified distance metric and the retrieved samples are the nearest neighbors of the query point w.r.t. this metric. However, exhaustively comparing the query sample with every other sample in the database may be computationally expensive in many current practical settings. Additionally, most CBIR approaches may be hindered by the sheer size of each sample; for example, visual descriptors of an image or a video may number in the thousands. Furthermore, storage of these high-dimensional data also presents a challenge.

Considerable effort has been invested in designing hash functions transforming the original data into compact binary codes to reap the benefits of a potentially fast similarity search; note that hash functions are typically designed to preserve certain similarity qualities between the data. For example, approximate nearest neighbors (ANN) search \cite{Torralba2008} using compact binary codes in Hamming space was shown to achieve sub-liner searching time. Storage of the binary code is, obviously, also much more efficient.  

Existing hashing methods can be divided into two categories: \textit{data-independent} and \textit{data-dependent}. The former category does not use a data-driven approach to choose the hash function. For example, Locality Sensitive Hashing (LSH) \cite{Gionis1999} randomly projects and thresholds data into the Hamming space for generating binary codes, where closely located (in terms of Euclidean distances in the data's native space) samples are likely to have similar binary codes. Furthermore, in \cite{Kulis2009}, the authors proposed a method for ANN search using a learned Mahalanobis metric combined with LSH. 

On the other hand, \textit{data-dependent methods} can, in turn, be grouped into supervised, unsupervised and semi-supervised learning paradigms. The bulk of work in data-dependent hashing methods has been performed so far following the supervised learning paradigm. Recent work includes the Semantic Hashing \cite{Salakhutdinov2009}, which designs the hash function using a Restricted Boltzmann Machine (RBM). Binary Reconstructive Embedding (BRE) in \cite{Kulis2009a} tries to minimize a cost function measuring the difference between the original metric distances and the reconstructed distances in the Hamming space. Minimal Loss Hashing (MLH) \cite{Norouzi2011} learns the hash function from pair-wise side information and the problem is formulated based on a bound inspired by the theory of structural Support Vector Machines \cite{Yu2009}. In \cite{Mu2010}, a scenario is addressed, where a small portion of sample pairs are manually labeled as similar or dissimilar and proposes the Label-regularized Max-margin Partition algorithm. Moreover, Self-Taught Hashing \cite{Zhang2010} first identifies binary codes for given documents via unsupervised learning; next, classifiers are trained to predict codes for query documents. Additionally, Fisher Linear Discriminant Analysis (LDA) is employed in \cite{Strecha2012} to embed the original data to a lower dimensional space and hash codes are obtained subsequently via thresholding. Also, Boosting based Hashing is used in \cite{Shakhnarovich2003} and \cite{Baluja2008}, in which a set of weak hash functions are learned according to the boosting framework. In \cite{Li2013}, the hash functions are learned from triplets of side information; their method is designed to preserve the relative relationship reflected by the triplets and is optimized using column generation. Finally, Kernel Supervised Hashing (KSH) \cite{Liu2012} introduces a kernel-based hashing method, which seems to exhibit remarkable experimental results.

As for unsupervised learning, several approaches have been proposed: Spectral Hashing (SPH) \cite{Weiss2008} designs the hash function by using spectral graph analysis with the assumption of a uniform data distribution. \cite{Liu2011} proposed Anchor Graph Hashing (AGH). AGH uses a small-size anchor graph to approximate low-rank adjacency matrices that leads to computational savings. Also, in \cite{Gong2011}, the authors introduce Iterative Quantization, which tries to learn an orthogonal rotation matrix so that the quantization error of mapping the data to the vertices of the binary hypercube is minimized.

To the best of our knowledge, the only approach to date following a semi-supervised learning paradigm is Semi-Supervised Hashing (SSH) \cite{Wang2010} \cite{Wang2012}. The SSH framework minimizes an empirical error using labeled data, but to avoid over-fitting, its model also includes an information theoretic regularizer that utilizes both labeled and unlabeled data.

In this paper we propose \ac{*SHL} (* stands for all three learning paradigms), a novel hash function learning approach, which sets itself apart from past approaches in two major ways. First, it uses a set of Hamming space codewords that are learned during training in order to capture the intrinsic similarities between the data's hash codes, so that same-class data are grouped together. Unlabeled data also contribute to the adjustment of codewords leveraging from the inter-sample dissimilarities of their generated hash codes as measured by the Hamming metric. Due to these codeword-specific characteristics, a major advantage offered by \ac{*SHL} is that it can naturally engage supervised, unsupervised and, even, semi-supervised hash learning tasks using a single formulation. Obviously, the latter ability readily allows \ac{*SHL} to perform transductive hash learning.

In \sref{sec:Formulation}, we provide \ac{*SHL}'s formulation, which is mainly motivated by an attempt to minimize the within-group Hamming distances in the code space between a group's codeword and the hash codes of data. 
With regards to the hash functions, \ac{*SHL} adopts a kernel-based approach. The aforementioned formulation eventually leads to a minimization problem over the codewords as well as over the \ac{RKHS} vectors defining the hash functions. A quite noteworthy aspect of the resulting problem is that the minimization over the latter parameters leads to a set of \ac{SVM} problems, according to which each \ac{SVM} generates a single bit of a sample's hash code. In lieu of choosing a fixed, arbitrary kernel function, we use a simple \ac{MKL} approach (\eg\ see \cite{Kloft2011}) to infer a good kernel from the data. We need to note here that Self-Taught Hashing (STH) \cite{Zhang2010} also employs \acp{SVM} to generate hash codes. However, STH differs significantly from \ac{*SHL}; its  unsupervised and supervised learning stages are completely decoupled, while \ac{*SHL} uses a single cost function that simultaneously accommodates both of these learning paradigms. Unlike STH, \acp{SVM} arise naturally from the problem formulation in \ac{*SHL}.

Next, in \sref{sec:Optimization}, an efficient \ac{MM} algorithm is showcased that can be used to optimize \ac{*SHL}'s framework via a \ac{BCD} approach. The first block optimization amounts to training a set of \acp{SVM}, which can be efficiently accomplished by using, for example, \texttt{LIBSVM} \cite{Chang2011}. The second block optimization step addresses the \ac{MKL} parameters, while the third one adjusts the codewords. Both of these steps are computationally fast due to the existence of closed-form solutions.   

Finally, in \sref{sec:Experiments} we demonstrate the capabilities of \ac{*SHL} on a series of comparative experiments. The section emphasizes on supervised hash learning problems in the context of CBIR, since the majority of hash learning approaches address this paradigm. We also included some preliminary transductive hash learning results for \ac{*SHL} as a proof of concept. Remarkably, when compared to other hashing methods on supervised learning hash tasks, \ac{*SHL} exhibits the best retrieval accuracy for all the datasets we considered. Some clues to \ac{*SHL}'s superior performance are provided in \sref{sec:Generalization}.

\section{Formulation}
\label{sec:Formulation}

In what follows, $\left[ \cdot \right]$ denotes the Iverson bracket, \ie, $\left[ \text{predicate} \right] = 1$, if the predicate is true, and $\left[ \text{predicate} \right] = 0$, if otherwise. Additionally, vectors and matrices are denoted in boldface. All vectors are considered column vectors and $\cdot^T$ denotes transposition. Also, for any positive integer $K$, we define $\mathbb{N}_K \triangleq \left\{ 1, \ldots, K \right\}$.

Central to hash function learning is the design of functions transforming data to compact binary codes in a Hamming space to fulfill a given machine learning task. Consider the Hamming space $\mathbb{H}^B \triangleq \left\{-1, 1\right\}^B$, which implies $B$-bit hash codes. \ac{*SHL} addresses multi-class classification tasks with an arbitrary set $\mathcal{X}$ as sample space. It does so by learning a hash function $\mathbf{h}: \mathcal{X} \rightarrow \mathbb{H}^B$ and a set of $G$ labeled codewords $\boldsymbol{\mu}_g, \ g \in \mathbb{N}_G$ (each codeword representing a class), so that the hash code of a labeled sample is mapped close to the codeword corresponding to the sample's class label; proximity is measured via the Hamming distance. Unlabeled samples are also able to contribute to learning both the hash function and the codewords as it will demonstrated in the sequel. Finally, a test sample is classified according to the label of the codeword closest to the sample's hash code.  

In \ac{*SHL}, the hash code for a sample $x \in \mathcal{X}$ is eventually computed as $\mathbf{h}(x) \triangleq \sgn \mathbf{f}(x) \in \mathbb{H}^B$, where the signum function is applied component-wise. Furthermore, $\mathbf{f}(x) \triangleq \left[ f_1(x) \ldots f_B(x) \right]^T$, where $f_b(x) \triangleq \left \langle w_b, \phi(x) \right \rangle_{\mathcal{H}_b} + \beta_b$ with $w_b \in \Omega_{w_b} \triangleq \left\{ w_b \in \mathcal{H}_b: \left\| w_b \right\|_{\mathcal{H}_b} \leq R_b, R_b>0 \right\}$ and $\beta_b \in \mathbb{R}$ for all $b \in \mathbb{N}_B$. In the previous definition, $\mathcal{H}_b$ is a \ac{RKHS} with inner product $\left \langle \cdot, \cdot \right \rangle_{\mathcal{H}_b}$, induced norm $\left\| w_b \right\|_{\mathcal{H}_b} \triangleq \sqrt{ \left \langle w_b, w_b \right \rangle_{\mathcal{H}_b} }$ for all $w_b \in \mathcal{H}_b$, associated feature mapping $\phi_b: \mathcal{X} \rightarrow \mathcal{H}_b$ and reproducing kernel $k_b: \mathcal{X} \times \mathcal{X} \rightarrow \mathbb{R}$, such that $k_b(x,x') = \left \langle \phi_b(x), \phi_b(x') \right \rangle_{\mathcal{H}_b}$ for all $x,x' \in \mathcal{X}$. Instead of a priori selecting the kernel functions $k_b$, \ac{MKL} \cite{Kloft2011} is employed to infer the feature mapping for each bit from the available data. In specific, it is assumed that each \ac{RKHS} $\mathcal{H}_b$ is formed as the direct sum of $M$ common, pre-specified \acp{RKHS} $\mathcal{H}_m$, \ie, $\mathcal{H}_b = \bigoplus_{m} \sqrt{\theta_{b,m}} \mathcal{H}_m$, where $\boldsymbol{\theta}_b \triangleq \left[ \theta_{b,1} \ldots \theta_{b,M} \right]^T \in \Omega_{\theta} \triangleq \left\{ \boldsymbol{\theta} \in \mathbb{R}^M: \boldsymbol{\theta} \succeq \mathbf{0}, \left\| \boldsymbol{\theta} \right\|_p \leq 1, p \geq 1 \right\}$, $\succeq$ denotes the component-wise $\geq$ relation, $\left\| \cdot \right\|_p$ is the usual $l_p$ norm in $\mathbb{R}^M$ and $m$ ranges over $\mathbb{N}_M$. Note that, if each preselected \ac{RKHS} $\mathcal{H}_m$ has associated kernel function $k_m$, then it holds that $k_b(x,x') = \sum_{m} \theta_{b,m} k_m(x,x')$ for all $x,x' \in \mathcal{X}$.  

Now, assume a training set of size $N$ consisting of labeled and unlabeled samples and let $\mathcal{N}_L$ and $\mathcal{N}_U$ be the index sets for these two subsets respectively. Let also $l_n$ for $n \in \mathcal{N}_L$ be the class label of the $n^{th}$ labeled sample. By adjusting its parameters, which are collectively denoted as $\boldsymbol{\omega}$, \ac{*SHL} attempts to reduce the distortion measure

\begin{align}
\label{eq:1}
	E(\boldsymbol{\omega}) \triangleq \sum_{n \in \mathcal{N}_L} d\left( \mathbf{h}(x_n), \boldsymbol{\mu}_{l_n} \right) + \sum_{n \in \mathcal{N}_U} \min_{g} d\left( \mathbf{h}(x_n), \boldsymbol{\mu}_{g} \right)
\end{align}

\noindent
where $d$ is the Hamming distance defined as $d(\mathbf{h}, \mathbf{h}') \triangleq \sum_{b} \left[ h_b \neq h'_b \right]$. However, the distortion $E$ is difficult to directly minimize. As it will be illustrated further below, an upper bound $\bar{E}$ of $E$ will be optimized instead.

In particular, for a hash code produced by \ac{*SHL}, it holds that $d\left( \mathbf{h}(x), \boldsymbol{\mu} \right) = \\ \sum_b \left[ \mu_b f_b(x) < 0 \right]$. If one defines $\bar{d}\left( \mathbf{f}, \boldsymbol{\mu} \right) \triangleq \sum_b \hinge{1 - \mu_b f_b}$, where $\hinge{u} \triangleq \max \left\{ 0, u \right\}$ is the hinge function, then $d\left( \sgn \mathbf{f}, \boldsymbol{\mu} \right) \leq \bar{d}\left( \mathbf{f}, \boldsymbol{\mu} \right)$ holds for every $\mathbf{f} \in \mathbb{R}^B$ and any $\boldsymbol{\mu} \in \mathbb{H}^B$. Based on this latter fact, it holds that

\begin{align}
	\label{eq:2}
	E(\boldsymbol{\omega}) & \leq \bar{E}(\boldsymbol{\omega}) \triangleq \sum_{g} \sum_{n} \gamma_{g,n} \bar{d}\left( \mathbf{f}(x_n), \boldsymbol{\mu}_g \right)
	\intertext{where}
	\label{eq:3}
	\gamma_{g,n} & \triangleq 
		\begin{cases}
			\left[ g = l_n \right] & n \in \mathcal{N}_L \\
			\left[ g = \arg \min_{g'} \bar{d}\left( \mathbf{f}(x_n), \boldsymbol{\mu}_{g'} \right)  \right] & n \in \mathcal{N}_U
		\end{cases}
\end{align}

\noindent
It turns out that $\bar{E}$, which constitutes the model's loss function, can be efficiently minimized by a three-step algorithm, which delineated in the next section. 

\section{Learning Algorithm}
\label{sec:Optimization}

The next proposition allows us to minimize $\bar{E}$ as defined in \eref{eq:2} via a \ac{MM} approach \cite{Hunter2004}, \cite{Hunter2000}. 

\begin{proposition}
\label{prop:1}
For any \ac{*SHL} parameter values $\boldsymbol{\omega}$ and $\boldsymbol{\omega}'$, it holds that

\begin{align}
	\label{eq:4}
	\bar{E}(\boldsymbol{\omega}) & \leq \bar{E}(\boldsymbol{\omega} | \boldsymbol{\omega}') \triangleq \sum_{g} \sum_{n} \gamma'_{g,n} \bar{d}\left( \mathbf{f}(x_n), \boldsymbol{\mu}_g \right)
	\intertext{where the primed quantities are evaluated on $\boldsymbol{\omega}'$ and}
	\label{eq:5}
	\gamma'_{g,n} & \triangleq 
		\begin{cases}
			\left[ g = l_n \right] & n \in \mathcal{N}_L \\
			\left[ g = \arg \min_{g'} \bar{d}\left( \mathbf{f}'(x_n), \boldsymbol{\mu}'_{g'} \right)  \right] & n \in \mathcal{N}_U
		\end{cases}
\end{align}

\noindent
Additionally, it holds that $\bar{E}(\boldsymbol{\omega} | \boldsymbol{\omega}) = \bar{E}(\boldsymbol{\omega})$ for any $\boldsymbol{\omega}$. In summa, $\bar{E}(\cdot | \cdot)$ majorizes $\bar{E}(\cdot)$.

\end{proposition}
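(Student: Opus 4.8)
The plan is to establish the two defining properties of a majorizer in turn: the tangency identity $\bar{E}(\boldsymbol{\omega} | \boldsymbol{\omega}) = \bar{E}(\boldsymbol{\omega})$ and the domination inequality $\bar{E}(\boldsymbol{\omega}) \leq \bar{E}(\boldsymbol{\omega} | \boldsymbol{\omega}')$ of \eref{eq:4}. The tangency is immediate and I would dispatch it first: on setting $\boldsymbol{\omega}' = \boldsymbol{\omega}$, every primed quantity coincides with its unprimed counterpart, so the selector $\gamma'_{g,n}$ of \eref{eq:5} collapses onto $\gamma_{g,n}$ of \eref{eq:3} index by index, and the two weighted sums are literally the same expression.

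For the inequality, the observation I would exploit is that both functionals are assembled from the \emph{same} factors $\bar{d}(\mathbf{f}(x_n), \boldsymbol{\mu}_g)$, each evaluated at the free argument $\boldsymbol{\omega}$, and differ only through their nonnegative weights $\gamma_{g,n}$ versus $\gamma'_{g,n}$. I would therefore split the double sum over $n$ into the labeled part $\mathcal{N}_L$ and the unlabeled part $\mathcal{N}_U$. On $\mathcal{N}_L$ the two weighting schemes are identical, since $\gamma_{g,n} = \gamma'_{g,n} = [g = l_n]$ by definition; those contributions cancel and the comparison reduces entirely to the unlabeled block.

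The crux lies in $\mathcal{N}_U$. For such an $n$, the unprimed weight $\gamma_{g,n}$ is the indicator of the index attaining $\min_{g'} \bar{d}(\mathbf{f}(x_n), \boldsymbol{\mu}_{g'})$, so that (with a fixed tie-breaking rule ensuring $\sum_g \gamma_{g,n} = 1$) one has $\sum_g \gamma_{g,n}\, \bar{d}(\mathbf{f}(x_n), \boldsymbol{\mu}_g) = \min_g \bar{d}(\mathbf{f}(x_n), \boldsymbol{\mu}_g)$. The primed weight, by contrast, pins the assignment to the single index $\hat{g} \triangleq \arg\min_{g'} \bar{d}(\mathbf{f}'(x_n), \boldsymbol{\mu}'_{g'})$ frozen at $\boldsymbol{\omega}'$, giving $\sum_g \gamma'_{g,n}\, \bar{d}(\mathbf{f}(x_n), \boldsymbol{\mu}_g) = \bar{d}(\mathbf{f}(x_n), \boldsymbol{\mu}_{\hat{g}})$. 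Since a minimum over $g$ can never exceed the value at the particular index $\hat{g}$, the per-sample bound $\min_g \bar{d}(\mathbf{f}(x_n), \boldsymbol{\mu}_g) \leq \bar{d}(\mathbf{f}(x_n), \boldsymbol{\mu}_{\hat{g}})$ holds, and summing over $\mathcal{N}_U$ delivers \eref{eq:4}.

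The main obstacle here is conceptual rather than computational: one must recognize that freezing the assignment variables at $\boldsymbol{\omega}'$ is exactly the device that strips the nonsmooth pointwise $\min_g$ out of the objective, converting $\bar{E}(\cdot \,|\, \boldsymbol{\omega}')$ into a surrogate that is linear in those fixed selectors and therefore tractable by the block-coordinate scheme of the next section. The only technical care required is to ensure the weights act as genuine selectors, that is, that ties in the $\arg\min$ are broken so that precisely one codeword is charged per unlabeled sample; this is the standard hard-assignment convention (as in Lloyd's algorithm) and is what validates the identities used for the weighted sums above.
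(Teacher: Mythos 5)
Your proof is correct and follows essentially the same route as the paper's own (one-sentence) argument: the paper simply observes that no assignment $\gamma'_{g,n} \in \{0,1\}$ other than the one defined in \eref{eq:3} can make the weighted sum smaller than $\bar{E}(\boldsymbol{\omega} \,|\, \boldsymbol{\omega}) = \bar{E}(\boldsymbol{\omega})$, which is precisely your per-sample minimum argument spelled out. Your labeled/unlabeled split and the tie-breaking remark (each unlabeled sample must charge exactly one codeword, so that the weighted sum equals the pointwise minimum) are details the paper glosses over, and the latter is indeed needed for the inequality to hold as stated.
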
 

\noindent
Its proof is relative straightforward and is based on the fact that for any value of $\gamma'_{g,n} \in \left\{0, 1\right\}$ other than $\gamma_{g,n}$ as defined in \eref{eq:3}, the value of $\bar{E}(\boldsymbol{\omega} | \boldsymbol{\omega}')$ can never be less than $\bar{E}(\boldsymbol{\omega} | \boldsymbol{\omega}) = \bar{E}(\boldsymbol{\omega})$.

The last proposition gives rise to a \ac{MM} approach, where $\boldsymbol{\omega}'$ are the current estimates of the model's parameter values and $\bar{E}(\boldsymbol{\omega} | \boldsymbol{\omega}')$ is minimized with respect to $\boldsymbol{\omega}$ to yield improved estimates $\boldsymbol{\omega}^*$, such that $\bar{E}(\boldsymbol{\omega}^*) \leq \bar{E}(\boldsymbol{\omega}')$. This minimization can be achieved via a \ac{BCD}.

\begin{proposition}
\label{prop:2}

Minimizing $\bar{E}(\cdot | \boldsymbol{\omega}')$ with respect to the Hilbert space vectors, the offsets $\beta_p$ and the \ac{MKL} weights $\boldsymbol{\theta}_b$, while regarding the codeword parameters as constant, one obtains the following $B$ independent, equivalent problems:

\begin{align}
	\label{eq:6} 
	\underset{\underset{\beta_b \in \mathbb{R}, \boldsymbol{\theta}_b \in \Omega_{\theta}, \mu_{g,b} \in \mathbb{H}}{w_{b,m} \in \mathcal{H}_m, m \in \mathbb{N}_M}}{\inf}
	C \sum_g \sum_n \gamma'_{g,n} \hinge{ 1 - \mu_{g,b} f_b(x_n)  } \nonumber \\  +  \frac{1}{2} \sum_m \frac{ \left\| w_{b,m} \right\|^2_{\mathcal{H}_m} }{\theta_{b,m}} \ \ \ b \in \mathbb{N}_B
\end{align}

\noindent
where $f_b(x) = \sum_m \left \langle w_{b,m} , \phi_m(x)  \right \rangle_{\mathcal{H}_m} + \beta_b$ and $C>0$ is a regularization constant.

\end{proposition}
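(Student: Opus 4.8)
The plan is to exploit two structural features of the majorizer $\bar{E}(\boldsymbol{\omega} | \boldsymbol{\omega}')$: its additive separability across the $B$ bits, and the classical equivalence between norm-constrained and norm-penalized risk minimization. First I would expand the majorizer with $\bar{d}(\mathbf{f},\boldsymbol{\mu}) = \sum_b \hinge{1 - \mu_b f_b}$ and interchange the order of summation to obtain
\begin{align}
\label{eq:expand}
\bar{E}(\boldsymbol{\omega} | \boldsymbol{\omega}') = \sum_b \Bigl( \sum_g \sum_n \gamma'_{g,n} \hinge{1 - \mu_{g,b} f_b(x_n)} \Bigr).
\end{align}
Because each $f_b$ depends solely on the parameters $w_b$, $\beta_b$ and $\boldsymbol{\theta}_b$ carrying the index $b$, and because the feasible region is a Cartesian product of the per-bit sets $\Omega_{w_b} \times \mathbb{R} \times \Omega_{\theta}$ (the codewords being held fixed by hypothesis), every summand in \eref{eq:expand} can be minimized independently of the others. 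This establishes the ``$B$ independent problems'' part of the claim.

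Next, for a fixed bit $b$ I would bring in the \ac{MKL} direct-sum structure $\mathcal{H}_b = \bigoplus_m \sqrt{\theta_{b,m}} \mathcal{H}_m$. The plan is to use the standard fact that an element $w_b$ of this weighted direct sum is represented by its components $w_{b,m} \in \mathcal{H}_m$, through which $f_b(x) = \sum_m \left\langle w_{b,m}, \phi_m(x) \right\rangle_{\mathcal{H}_m} + \beta_b$ and, crucially, $\left\| w_b \right\|^2_{\mathcal{H}_b} = \sum_m \left\| w_{b,m} \right\|^2_{\mathcal{H}_m} / \theta_{b,m}$. Substituting these identities recasts both the loss term and the norm constraint $\left\| w_b \right\|_{\mathcal{H}_b} \leq R_b$ defining $\Omega_{w_b}$ entirely in terms of the components $w_{b,m}$, producing exactly the objects that appear in \eref{eq:6}.

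The remaining and central step is to pass from this bound-constrained formulation to the penalized one in \eref{eq:6}. Here I would invoke the Ivanov--Tikhonov equivalence. The per-bit loss is convex in $(w_{b,m}, \beta_b)$ (a sum of hinges of affine maps); the term $\frac{1}{2} \sum_m \left\| w_{b,m} \right\|^2_{\mathcal{H}_m} / \theta_{b,m}$ is jointly convex in $(w_{b,m}, \theta_{b,m})$ for $\theta_{b,m} > 0$, being a sum of quadratic-over-linear perspective functions; and $\Omega_{\theta}$ is convex, so the whole per-bit program is jointly convex and the constraint $\sum_m \left\| w_{b,m} \right\|^2_{\mathcal{H}_m} / \theta_{b,m} \leq R_b^2$ is a convex sublevel set. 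Since $w_b = 0$ is strictly feasible for every $R_b > 0$, Slater's condition holds and strong duality applies; forming the Lagrangian for this ball constraint shows that, for each radius $R_b$, the optimal multiplier supplies (up to a scalar) the value of $C > 0$ for which the constrained minimizer coincides with a solution of \eref{eq:6}. I expect this equivalence to be the crux, as it is where convexity and the multiplier--radius correspondence must be pinned down; the separability and the \ac{MKL} norm substitution are essentially bookkeeping. To close, I would note that because $\sum_g \gamma'_{g,n} = 1$ assigns each sample a single effective label $\mu_{g_n,b}$, each problem in \eref{eq:6} is, for fixed $\boldsymbol{\theta}_b$, a weighted \ac{SVM}, which is what licenses calling these subproblems \ac{SVM} problems.
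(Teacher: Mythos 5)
Your proposal is correct and follows essentially the same route as the paper's own (one-sentence) proof: the paper likewise replaces the norm-ball constraints on the Hilbert-space vectors with equivalent regularization terms (your Ivanov--Tikhonov step) and then invokes the standard MKL substitution $w_{b,m} \gets \sqrt{\theta_{b,m}}\, w_{b,m}$, which carries the same content as your direct-sum norm identity $\left\| w_b \right\|^2_{\mathcal{H}_b} = \sum_m \left\| w_{b,m} \right\|^2_{\mathcal{H}_m}/\theta_{b,m}$. You merely spell out the per-bit separability, the joint convexity, and the multiplier--radius correspondence that the paper leaves implicit.
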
 

The proof of this proposition hinges on replacing the (independent) constraints of the Hilbert space vectors with equivalent regularization terms and, finally, performing the substitution $w_{b,m} \gets \sqrt{\theta_{b,m}} w_{b,m}$ as typically done in such \ac{MKL} formulations (\eg\ see \cite{Kloft2011}). Note that \pref{eq:6} is jointly convex with respect to all variables under consideration and, under closer scrutiny, one may recognize it as a binary \ac{MKL} \ac{SVM} training problem, which will become more apparent shortly.  

\textbf{First block minimization:} By considering $w_{b,m}$ and $\beta_b$ for each $b$ as a single block, instead of directly minimizing \pref{eq:6}, one can instead maximize the following problem:

\begin{proposition}
\label{prop:2.5}

The dual form of \pref{eq:6} takes the form of

\begin{align}
\label{eq:7}
\underset{\boldsymbol{\alpha}_b \in \Omega_{a_b}}{\sup} & \ \ \boldsymbol{\alpha}_b^T	\boldsymbol{1}_{NG} - \frac{1}{2} \boldsymbol{\alpha}_b^T \mathbf{D}_b [(\boldsymbol{1}_G \boldsymbol{1}_G^T) \otimes \mathbf{K}_b] \mathbf{D}_b \boldsymbol{\alpha}_b \ \ \ b \in \mathbb{N}_B
\end{align}

\noindent
where $\mathbf{1}_K$ stands for the all ones vector of $K$ elements ($K \in \mathbb{N}$), $\boldsymbol{\mu}_b \triangleq \left[ \mu_{1,b} \ldots \mu_{G,b} \right]^T$, $\mathbf{D}_b \triangleq \diag{\boldsymbol{\mu}_b \otimes \mathbf{1}_N}$, $\mathbf{K}_b \triangleq \sum_m \theta_{b,m} \mathbf{K}_m$, where $\mathbf{K}_m$ is the data's $m^{th}$ kernel matrix, $\Omega_{a_b} \triangleq \left\{ \boldsymbol{\alpha} \in \mathbb{R}^{NG}: \boldsymbol{\alpha}_b^T (\boldsymbol{\mu}_b \otimes \boldsymbol{1}_N) = 0, \boldsymbol{0} \preceq \boldsymbol{\alpha}_b \preceq C \boldsymbol{\gamma}'  \right\}$ \\ and $\boldsymbol{\gamma}' \triangleq \left[ \gamma'_{1,1}, \ldots, \gamma'_{1,N}, \gamma'_{2,1}, \ldots, \gamma'_{G,N} \right]^T$. 
\end{proposition}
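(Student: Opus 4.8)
The plan is to recognize \pref{eq:6}, for fixed codewords $\boldsymbol{\mu}_b$ and \ac{MKL} weights $\boldsymbol{\theta}_b$, as a convex \ac{SVM}-type training problem and to derive \pref{eq:7} as its Lagrangian dual. First I would epigraphically linearize the hinge loss: introducing one slack variable $\xi_{g,n}\geq 0$ per $(g,n)$ pair, the term $C\sum_g\sum_n \gamma'_{g,n}\hinge{1-\mu_{g,b}f_b(x_n)}$ is replaced by $C\sum_g\sum_n \gamma'_{g,n}\xi_{g,n}$ subject to the margin constraints $\mu_{g,b}f_b(x_n)\geq 1-\xi_{g,n}$ and $\xi_{g,n}\geq 0$, with $f_b(x_n)=\sum_m\langle w_{b,m},\phi_m(x_n)\rangle_{\mathcal{H}_m}+\beta_b$. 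This leaves a smooth convex problem in $(w_{b,m},\beta_b,\xi_{g,n})$ with affine constraints, for which Slater's condition holds trivially, so strong duality lets me pass to the dual with no duality gap.

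Next I would form the Lagrangian, attaching multipliers $\alpha_{g,n}\geq 0$ to the margin constraints and $\lambda_{g,n}\geq 0$ to the slack-nonnegativity constraints, and impose the stationarity (KKT) conditions. Differentiating with respect to $w_{b,m}$ yields the representer relation $w_{b,m}=\theta_{b,m}\sum_{g,n}\alpha_{g,n}\mu_{g,b}\phi_m(x_n)$; differentiating with respect to $\beta_b$ produces $\sum_{g,n}\alpha_{g,n}\mu_{g,b}=0$, which in stacked form is precisely $\boldsymbol{\alpha}_b^T(\boldsymbol{\mu}_b\otimes\boldsymbol{1}_N)=0$; and differentiating with respect to $\xi_{g,n}$ gives $C\gamma'_{g,n}-\alpha_{g,n}-\lambda_{g,n}=0$, which together with $\alpha_{g,n},\lambda_{g,n}\geq 0$ delivers the box constraint $\boldsymbol{0}\preceq\boldsymbol{\alpha}_b\preceq C\boldsymbol{\gamma}'$. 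In particular the pairs with $\gamma'_{g,n}=0$ are forced to $\alpha_{g,n}=0$, consistently dropping them from the active problem, and these three conditions together carve out exactly the feasible set $\Omega_{a_b}$.

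Then I would substitute the representer relation back into the Lagrangian. Using $\langle\phi_m(x_n),\phi_m(x_{n'})\rangle_{\mathcal{H}_m}=k_m(x_n,x_{n'})$ and $\sum_m\theta_{b,m}k_m=k_b$, the regularizer plus the cross term collapse to $-\tfrac{1}{2}\sum_{g,n}\sum_{g',n'}\alpha_{g,n}\alpha_{g',n'}\mu_{g,b}\mu_{g',b}k_b(x_n,x_{n'})$, while the remaining linear part is $\sum_{g,n}\alpha_{g,n}$. The final, and really the only bookkeeping-heavy, step is to confirm the matrix form: with $\boldsymbol{\alpha}_b$ and $\boldsymbol{\gamma}'$ ordered first by $g$ then by $n$, the diagonal $\mathbf{D}_b=\diag{\boldsymbol{\mu}_b\otimes\boldsymbol{1}_N}$ carries the factor $\mu_{g,b}$ on entry $(g,n)$, and $(\boldsymbol{1}_G\boldsymbol{1}_G^T)\otimes\mathbf{K}_b$ places a copy of $k_b(x_n,x_{n'})$ in every $(g,g')$ block; hence $\mathbf{D}_b[(\boldsymbol{1}_G\boldsymbol{1}_G^T)\otimes\mathbf{K}_b]\mathbf{D}_b$ has $((g,n),(g',n'))$ entry $\mu_{g,b}\mu_{g',b}k_b(x_n,x_{n'})$, reproducing the quadratic form, while $\boldsymbol{\alpha}_b^T\boldsymbol{1}_{NG}$ reproduces the linear term. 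I expect the main obstacle to be exactly this index/Kronecker bookkeeping---verifying the block structure of $(\boldsymbol{1}_G\boldsymbol{1}_G^T)\otimes\mathbf{K}_b$ and the placement of the $\mu$ factors through $\mathbf{D}_b$---rather than any conceptual difficulty, since the duality and KKT steps are entirely standard for \ac{SVM}s.
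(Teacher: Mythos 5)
Your proposal is correct, and it reaches \eref{eq:7} by a route that differs from the paper's in a meaningful technical detail. The paper first invokes the Representer Theorem to parametrize the primal solution as $w_{b,m} = \theta_{b,m}\sum_n \eta_{b,n}\phi_m(x_n)$ with a single coefficient vector $\boldsymbol{\eta}_b \in \mathbb{R}^N$, rewrites the slack-variable problem in vectorized form, and then dualizes; eliminating $\boldsymbol{\eta}_b$ from the stationarity condition explicitly requires the invertibility of $\mathbf{K}_b$ (the paper flags this with $\exists \mathbf{K}_b^{-1}$), and the stated quadratic form $\mathbf{D}_b[(\boldsymbol{1}_G\boldsymbol{1}_G^T)\otimes\mathbf{K}_b]\mathbf{D}_b$ is then recovered through a chain of Kronecker-product identities (the mixed-product property and $\diag{\boldsymbol{v}\otimes\boldsymbol{1}} = \diag{\boldsymbol{v}}\otimes\mathbf{I}$). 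You instead dualize directly in the \ac{RKHS}: the kernel expansion $w_{b,m}=\theta_{b,m}\sum_{g,n}\alpha_{g,n}\mu_{g,b}\phi_m(x_n)$ emerges as a KKT stationarity condition (with $NG$ coefficients absorbing the codeword signs), no invertibility of $\mathbf{K}_b$ is ever needed, and the matrix form is verified entrywise by checking that $\mathbf{D}_b[(\boldsymbol{1}_G\boldsymbol{1}_G^T)\otimes\mathbf{K}_b]\mathbf{D}_b$ has $((g,n),(g',n'))$ entry $\mu_{g,b}\mu_{g',b}k_b(x_n,x_{n'})$. Your version is arguably cleaner and slightly more general, since it removes the nondegeneracy assumption on $\mathbf{K}_b$ that the paper's elimination step relies on; what the paper's route buys is an explicit primal coefficient vector of size $N$ (rather than $NG$) and a purely matrix-algebraic derivation once the Representer Theorem has been applied. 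Both derivations agree on the constraint set $\Omega_{a_b}$, which in each case comes from stationarity in $\beta_b$ (the equality constraint) and in the slacks (the box constraint $\boldsymbol{0}\preceq\boldsymbol{\alpha}_b\preceq C\boldsymbol{\gamma}'$, which correctly forces $\alpha_{g,n}=0$ whenever $\gamma'_{g,n}=0$).
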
 

\begin{proof}
After eliminating the hinge function in \pref{eq:6} with the help of slack variables $\xi^b_{g,n}$, we obtain the following problem for the first block minimization:

\begin{align}
\label{eq:s1}
\underset{\xi^b_{g,n}}{\underset{w_{b,m}, \beta_b }{min}} & \ \ C\sum_g \sum_n \gamma'_{g,n} \xi^b_{g,n} + \frac{1}{2} \sum_m \frac{\left\| w_{b,m} \right\|^2_{\mathcal{H}_m}}{\theta_{b,m}} \nonumber \\
\text{s.t.} & \ \ \xi^b_{g,n} \geq 0 \nonumber \\
	 & \ \ \xi^b_{g,n} \geq 1-(\sum_m \left \langle w_{b,m} , \phi_m(x)  \right \rangle_{\mathcal{H}_m} + \beta_b)\mu_{g,b}
\end{align}

\noindent Due to the Representer Theorem (\eg, see \cite{Scholkopf2001}), we have that

\begin{align}
\label{eq:s2}
w_{b,m} = \theta_{b,m}\sum_n \eta_{b,n} \phi_m(x_n)
\end{align}

\noindent where $n$ is the training sample index. By defining 
$\boldsymbol{\xi}_b \in \mathbb{R}^{RG}$ to be the vector containing all $\xi^b_{g,n}$'s, 
$\boldsymbol{\eta}_b \triangleq [\eta_{b,1},\eta_{b,2},...,\eta_{b,N}]^T \in \mathbb{R}^N$ and $\boldsymbol{\mu}_b \triangleq [\mu_{1,b}, \mu_{2,b},...,\mu_{G,b}]^T \in \mathbb{R}^{G}$, the vectorized version of \pref{eq:s1} in light of \eref{eq:s2} becomes

\begin{align}
\label{eq:s3}
\underset{\boldsymbol{\eta}_b, \boldsymbol{\xi}_b, \beta_b}{min} & \ \ C \boldsymbol{\gamma}' \boldsymbol{\xi}_b + \frac{1}{2} \boldsymbol{\eta}^T_b \mathbf{K}_b \boldsymbol{\eta}_b\nonumber \\
s.t. & \ \ \boldsymbol{\xi}_b \succeq \boldsymbol{0} \nonumber \\
	 & \ \ \boldsymbol{\xi}_b \succeq \boldsymbol{1}_{NG}-(\boldsymbol{\mu}_b \otimes \mathbf{K}_b)\boldsymbol{\eta}_b-(\boldsymbol{\mu}_b \otimes \boldsymbol{1}_N)\beta_b
\end{align}

\noindent 
where $\boldsymbol{\gamma}'$ and $\mathbf{K}_b$ are defined in \propref{prop:2.5}. 
From the previous problem's Lagrangian $\mathcal{L}$, one obtains

\begin{align}
\label{eq:s4}
\frac{\partial \mathcal{L}}{\partial \boldsymbol{\xi}_b} = \boldsymbol{0} & \Rightarrow \begin{cases}
\boldsymbol{\lambda}_b = C\boldsymbol{\gamma}'-\boldsymbol{\alpha}_b  \\ 
 \boldsymbol{0} \preceq \boldsymbol{\alpha}_b \preceq C \boldsymbol{\gamma}'
\end{cases} \\
\label{eq:s5}
\frac{\partial \mathcal{L}}{\partial \beta_b} = 0 & \Rightarrow \boldsymbol{\alpha}_b^T(\boldsymbol{\mu}_b \otimes\boldsymbol{1}_N) = 0 \\
\label{eq:s6}
\frac{\partial \mathcal{L}}{\partial \boldsymbol{\eta}_b} = \boldsymbol{0} & \overset{\exists \mathbf{K}_b^{-1}}{\Rightarrow} \boldsymbol{\eta}_b = \mathbf{K}_b^{-1}(\boldsymbol{\mu}_b \otimes \mathbf{K}_b)^T \boldsymbol{\alpha}_b  
\end{align}

\noindent 
where $\boldsymbol{\alpha}_b$ and $\boldsymbol{\lambda}_b$ are the dual variables for the two constraints in \pref{eq:s3}. Utilizing \eref{eq:s4}, \eref{eq:s5} and \eref{eq:s6}, the quadratic term of the dual problem becomes 

\begin{align}
\label{eq:s7}
& (\boldsymbol{\mu}_b \otimes \mathbf{K}_b) \mathbf{K}_b^{-1} (\boldsymbol{\mu}_b^T \otimes \mathbf{K}_b)= \nonumber \\
& = (\boldsymbol{\mu}_b \otimes \mathbf{K}_b) (1 \otimes \mathbf{K}_b^{-1}) (\boldsymbol{\mu}_b^T \otimes \mathbf{K}_b) \nonumber \\
& = (\boldsymbol{\mu}_b \otimes \mathbf{I}_{N \times N})(\boldsymbol{\mu}_b^T \otimes \mathbf{K}_b) \nonumber \\
& = (\boldsymbol{\mu}_b \boldsymbol{\mu}_b^T) \otimes \mathbf{K}_b
\intertext{\eref{eq:s7} can be further manipulated as}
%
%
\label{eq:s8}
& (\boldsymbol{\mu}_b \boldsymbol{\mu}_b^T) \otimes \mathbf{K}_b= \nonumber \\
& = [(\diag{\boldsymbol{\mu}_b} \boldsymbol{1}_G)(\diag{\boldsymbol{\mu}_b} \boldsymbol{1}_G)^T] \otimes \mathbf{K}_b \nonumber \\
& = [\diag{\boldsymbol{\mu}_b}(\boldsymbol{1}_G \boldsymbol{1}_G^T)\diag{\boldsymbol{\mu}_b}] \otimes [\mathbf{I}_N \mathbf{K}_b \mathbf{I}_N] \nonumber \\
& = [\diag{\boldsymbol{\mu}_b} \otimes \mathbf{I}_N][(\boldsymbol{1}_G \boldsymbol{1}_G^T) \otimes \mathbf{K}_b][\diag{\boldsymbol{\mu}_b} \otimes \mathbf{I}_N] \nonumber \\
& = [\diag{\boldsymbol{\mu}_b \otimes \boldsymbol{1}_N}][(\boldsymbol{1}_G \boldsymbol{1}_G^T) \otimes \mathbf{K}_b][\diag{\boldsymbol{\mu}_b \otimes \boldsymbol{1}_N}] \nonumber \\
& = \mathbf{D}_b [(\boldsymbol{1}_G \boldsymbol{1}_G^T) \otimes \mathbf{K}_b] \mathbf{D}_b
\end{align}

\noindent The first equality stems from the identity $\diag{\boldsymbol{v}}\boldsymbol{1} = \boldsymbol{v}$ for any vector $\boldsymbol{v}$, while the third one stems form the mixed-product property of the Kronecker product. Also, the identity $\diag{\boldsymbol{v} \otimes \boldsymbol{1}} = \diag{\boldsymbol{v}} \otimes \mathbf{I}$ yields the fourth equality. Note that $\mathbf{D}_b$ is defined as in \propref{prop:2.5}. Taking into account \eref{eq:s7} and \eref{eq:s8}, we reach the dual form stated in \propref{prop:2.5}.
\end{proof}

Given that $\gamma'_{g,n} \in \left\{ 0, 1 \right\}$, one can easily now recognize that \pref{eq:7} is an \ac{SVM} training problem, which can be conveniently solved using software packages such as \texttt{LIBSVM}. After solving it, obviously one can compute the quantities $\left \langle w_{b,m} , \phi_m(x)  \right \rangle_{\mathcal{H}_m}$, $\beta_b$ and $\left\| w_{b,m} \right\|^2_{\mathcal{H}_m}$, which are required in the next step. 

\textbf{Second block minimization:} Having optimized over the \ac{SVM} parameters, one can now optimize the cost function of \pref{eq:6} with respect to the \ac{MKL} parameters $\boldsymbol{\theta}_b$ as a single block using the closed-form solution mentioned in Prop. 2 of \cite{Kloft2011} for $p>1$ and which is given next.

\begin{align}
	\label{eq:8}
	\theta_{b,m} = \frac{\left \| w_{b,m} \right \|^{\frac{2}{p+1}}_{\mathcal{H}_m}}{ \left( \sum_{m'} \left \| w_{b,m'} \right \|^{\frac{2p}{p+1}}_{\mathcal{H}_{m'}}  \right)^{\frac{1}{p}}}, \ \ \ m \in \mathbb{N}_M, b \in \mathbb{N}_B.
\end{align}

\textbf{Third block minimization:} Finally, one can now optimize the cost function of \pref{eq:6} with respect to the codewords by mere substitution as shown below.

\begin{align}
	\label{eq:9}
	\underset{\mu_{g,b} \in \mathbb{H}}{\inf} \ \ \sum_n \gamma_{g,n} \hinge{1 - \mu_{g,b} f_b(x_n)} \ \ \ g \in \mathbb{N}_G, b \in \mathbb{N}_B
\end{align}

On balance, as summarized in \aref{alg1}, for each bit, the combined \ac{MM}/\ac{BCD} algorithm consists of one \ac{SVM} optimization step, and two fast steps to optimize the \ac{MKL} coefficients and codewords respectively. Once all model parameters $\boldsymbol{\omega}$ have been computed in this fashion, their values become the current estimate (\ie, $\boldsymbol{\omega}' \gets \boldsymbol{\omega}$ ), the $\gamma_{g,n}$'s are accordingly updated and the algorithm continues to iterate until convergence is established\footnote{A \texttt{MATLAB\textsuperscript{\circledR}} implementation of our framework is available at \\  \href{https://github.com/yinjiehuang/StarSHL}{https://github.com/yinjiehuang/StarSHL}}. Based on \texttt{LIBSVM}, which provides $\mathcal{O}(N^3)$ complexity \cite{List2009}, our algorithm offers the complexity $\mathcal{O}(BN^3)$ per iteration , where $B$ is the code length and $N$ is the number of instances.   
 
 \begin{algorithm}[tb]
 \caption{Optimization of \pref{eq:6}}
 \label{alg1}
 \begin{algorithmic}
 \STATE {\bfseries Input:} Bit Length $B$, Training Samples $X$ containing labeled or unlabled data.
 \STATE {\bfseries Output:} $\boldsymbol{\omega}$.
 \STATE 1. Initialize $\boldsymbol{\omega}$.
 \STATE 2. {\bfseries While Not Converged}
 \STATE 3. \quad {\bfseries For each bit} 
 \STATE 4. \quad \quad $\gamma_{g, n}' \leftarrow \eref{eq:5}$.
 \STATE 5. \quad \quad Step 1: $w_{b,m} \leftarrow \eref{eq:7}$.
 \STATE 6. \quad \quad \quad \quad \quad \ $\beta_b \leftarrow \eref{eq:7}$.
 \STATE 7. \quad \quad Step 2: Compute $\left\| w_{b,m} \right\|^2_{\mathcal{H}_m}$.
 \STATE 8. \quad \quad \quad \quad \quad \ $\theta_{b, m} \leftarrow \eref{eq:8}$.
 \STATE 9. \quad \quad Step 3: $\mu_{g,b} \leftarrow \eref{eq:9}$. 
 \STATE 10. \quad {\bfseries End For}
 \STATE 11. {\bfseries End While}
 \STATE 12. Output $\boldsymbol{\omega}$.
 \end{algorithmic}
 \end{algorithm}

\section{Insights to Generalization Performance}
\label{sec:Generalization}

The superior performance of \ac{*SHL} over other state-of-the-art hash function learning approaches featured in the next section can be explained to some extend by noticing that \ac{*SHL} training attempts to minimize the normalized (by $B$) expected Hamming distance of a labeled sample to the correct codeword, which is demonstarted next. We constrain ourselves to the case, where the training set consists only of labeled samples (\ie, $N = \mathcal{N}_L$, $\mathcal{N}_U = 0$) and, for reasons of convenience, to a single-kernel learning scenario, where each code bit is associated to its own feature space $\mathcal{H}_b$ with corresponding kernel function $k_b$. Also, due to space limitations, we provide the next result without proof.

\begin{lemma}
\label{lemma:1}
Let $\mathcal{X}$ be an arbitrary set, $\mathcal{F} \triangleq \{ \mathbf{f}: x \mapsto \mathbf{f}(x) \in \mathbb{R}^B, \ x \in \mathcal{X} \}$, $\Psi: \mathbb{R}^B \rightarrow \mathbb{R}$ be L-Lipschitz continuous w.r.t $\left \| \cdot \right \|_1$, then

\begin{align}
\label{eq:4-1}
\hat{\Re}_N \left( \Psi \circ \mathcal{F} \right) \leq L\hat{\Re}_N \left( \left \| \mathcal{F} \right \|_1 \right)
\end{align}

\noindent 
where $\circ$ stands for function composition, $\hat{\Re}_N(\mathcal{G}) \triangleq \frac{1}{N} \Es{\boldsymbol{\sigma}} { \sup_{g \in \mathcal{G}} \sum_n \sigma_n g(x_n, l_n)}$ is the empirical Rademacher complexity of a set $\mathcal{G}$ of functions, $\{ x_n, l_n\}$ are i.i.d. samples and $\sigma_n$ are i.i.d random variables taking values with $Pr\{ \sigma_n = \pm 1\} = \frac{1}{2}$.
\end{lemma}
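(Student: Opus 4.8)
The plan is to read this as an $\ell_1$-type vector-contraction inequality and to reduce it to repeated application of the classical scalar Ledoux--Talagrand contraction principle, peeling the $B$ coordinates of $\mathbf{f}$ one at a time. As a harmless first step I would normalize $\Psi$ so that $\Psi(\mathbf{0})=0$: subtracting the constant $\Psi(\mathbf{0})$ preserves the Lipschitz constant $L$ and leaves $\hat{\Re}_N(\Psi\circ\mathcal{F})$ unchanged, since any additive constant $c$ contributes $\frac{1}{N}\Es{\boldsymbol{\sigma}}{\sum_n \sigma_n c}=0$ by the zero-mean property of the $\sigma_n$.

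The structural fact that makes the $\ell_1$ hypothesis usable is the following: because $\Psi$ is $L$-Lipschitz with respect to $\|\cdot\|_1$, freezing all but one coordinate of its argument yields a scalar map that is $L$-Lipschitz with respect to $|\cdot|$, as a change in a single coordinate alters the $\ell_1$ norm of the argument by exactly the absolute size of that change. This is precisely what decouples the $B$ bits and permits a coordinatewise treatment, in contrast to the Euclidean case where no such exact decoupling is available.

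Next I would peel the coordinates. Starting from $N\hat{\Re}_N(\Psi\circ\mathcal{F})=\Es{\boldsymbol{\sigma}}{\sup_{\mathbf{f}}\sum_n \sigma_n \Psi(f_1(x_n),\ldots,f_B(x_n))}$, I would apply the scalar contraction to the first coordinate, replacing the $f_1$-dependence of $\Psi$ by a signed linear term $L\epsilon_1 f_1$ (the sign $\epsilon_1\in\{-1,1\}$ emerging when the absolute value in the symmetrized increment is removed) and absorbing $\epsilon_1$ into the supremum. Iterating over the remaining bits collapses $\Psi$ entirely and produces $\frac{L}{N}\Es{\boldsymbol{\sigma}}{\sup_{\mathbf{f},\boldsymbol{\epsilon}}\sum_n \sigma_n \sum_b \epsilon_b f_b(x_n)}$. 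Since $\sup_{\boldsymbol{\epsilon}\in\{-1,1\}^B}\sum_b \epsilon_b f_b = \|\mathbf{f}\|_1$ pointwise, the supremum over the sign pattern is exactly what realizes the class $\|\mathcal{F}\|_1$, and the right-hand side becomes $L\,\hat{\Re}_N(\|\mathcal{F}\|_1)$, as claimed.

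The main obstacle is that the supremum runs over $\mathbf{f}\in\mathcal{F}$ jointly in all coordinates and all samples, so a coordinate cannot literally be held fixed while another is peeled; each peeling step must instead be justified by the Ledoux--Talagrand comparison argument, conditioning on the other Rademacher variables, symmetrizing the $n$-th summand through a ghost function $\mathbf{g}\in\mathcal{F}$, bounding the increment $\Psi(\mathbf{f}(x_n))-\Psi(\mathbf{g}(x_n))$ by the coordinatewise Lipschitz bound, and re-absorbing. Two points require care: that the coupling across samples and across bits is handled without inflating the constant, and that the sign freedom generated by successive contractions assembles precisely into the $\ell_1$ norm. Securing the clean constant $L$ --- rather than the factor $2L$ or the $\sqrt{2}L$ that a generic Euclidean vector-contraction route would yield --- is exactly where the $\ell_1$-Lipschitz hypothesis, as opposed to an $\ell_2$ one, must be invoked, and I expect this to be the most delicate part of the argument.
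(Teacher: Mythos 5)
You should note at the outset that the paper contains no proof of \lemmaref{lemma:1} to compare against: the authors state it explicitly ``without proof,'' so your proposal has to stand entirely on its own. It does not, and the failure is at the final assembly step. Whatever version of the Ledoux--Talagrand comparison you use to peel a coordinate, the sign it produces is one of two things: either (i) a fresh \emph{independent} Rademacher variable $\sigma_{n,b}$ attached to that coordinate of that sample --- this is exactly Maurer's vector-contraction lemma, whose right-hand side is $\mathbb{E}_{\boldsymbol{\sigma}}\sup_{\mathbf{f}}\sum_{n,b}\sigma_{n,b}f_b(x_n)$ and cannot be rewritten as $\mathbb{E}_{\boldsymbol{\sigma}}\sup_{\mathbf{f}}\sum_n\sigma_n\|\mathbf{f}(x_n)\|_1$ --- or (ii) a supremum over per-sample sign patterns $\boldsymbol{\epsilon}_n\in\{\pm1\}^B$ taken jointly with the supremum over $\mathbf{f}$. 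In case (ii) the supremum interacts with the sample's Rademacher sign and erases it: $\sup_{\boldsymbol{\epsilon}_n}\sigma_n\sum_b\epsilon_{n,b}f_b(x_n)=\sum_b|f_b(x_n)|$ \emph{regardless of the sign of} $\sigma_n$, so what you actually reach is $\frac{L}{N}\sup_{\mathbf{f}}\sum_n\|\mathbf{f}(x_n)\|_1$, a quantity with no Rademacher averaging left in it, which is in general far larger than $L\hat{\Re}_N\left(\|\mathcal{F}\|_1\right)$ and does not decay with $N$. In neither case does the sign freedom ``assemble precisely into the $\ell_1$ norm'' inside the Rademacher average; securing the constant $L$ is the least of the problems.

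No patch can fix this, because the inequality \eref{eq:4-1} you are trying to prove is false as stated. Take $B=2$, any $N$, labels ignored, and two \emph{constant} functions $\mathcal{F}=\{\mathbf{f}^{(1)},\mathbf{f}^{(2)}\}$ with $\mathbf{f}^{(1)}(x)\equiv(2,0)$, $\mathbf{f}^{(2)}(x)\equiv(1,1)$, together with $\Psi(u_1,u_2)=\max\{0,u_1-1\}+\max\{0,u_2-1\}$, which is $1$-Lipschitz w.r.t. $\|\cdot\|_1$ (and even of the separable form $\sum_b\phi(u_b)$ that Theorem~\ref{theorem1} needs). Both functions have the same $\ell_1$ norm everywhere, $\|\mathbf{f}^{(1)}(x)\|_1=\|\mathbf{f}^{(2)}(x)\|_1\equiv2$, so $\|\mathcal{F}\|_1$ collapses to a single constant function and
\begin{align}
\hat{\Re}_N\left(\|\mathcal{F}\|_1\right)=\frac{1}{N}\,\mathbb{E}_{\boldsymbol{\sigma}}\Big[2\sum_n\sigma_n\Big]=0,
\end{align}
whereas $\Psi(\mathbf{f}^{(1)}(x))\equiv1$ and $\Psi(\mathbf{f}^{(2)}(x))\equiv0$ give
\begin{align}
\hat{\Re}_N\left(\Psi\circ\mathcal{F}\right)=\frac{1}{N}\,\mathbb{E}_{\boldsymbol{\sigma}}\Big[\max\Big\{\sum_n\sigma_n,\,0\Big\}\Big]
=\frac{1}{2N}\,\mathbb{E}_{\boldsymbol{\sigma}}\Big|\sum_n\sigma_n\Big|>0.
\end{align}
So the left side of \eref{eq:4-1} strictly exceeds the right side for every $N$ with $L=1$. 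The structural moral is the one your proposal glosses over: a single Rademacher sign per sample cannot survive a vector contraction, and any correct lemma of this type must carry one sign per sample \emph{and per coordinate} on the right-hand side. (This also means the paper's own argument at \eref{eq:4-6} rests on a false lemma; a sound route to a bound like Theorem~\ref{theorem1} is to exploit the separable form of $Q_\rho$, bound the supremum of the sum over $b$ by the sum of suprema, and apply the scalar contraction principle coordinate-by-coordinate.)
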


\noindent
To show the main theoretical result of our paper with the help of the previous lemma, we will consider the sets of functions

\begin{align}
\label{eq:4-2}
\bar{\mathcal{F}} \triangleq & \{ \mathbf{f}: x \mapsto [f_1(x), ..., f_B(x)]^T, f_b \in \mathcal{F}_b, b \in \mathbb{N}_B\} \\
\label{eq:4-3}
\mathcal{F}_b \triangleq & \{ f_b: x \mapsto \left\langle w_b, \phi_b(x) \right\rangle_{\mathcal{H}_b} + \beta_b, \  \beta_b \in \mathbb{R} \ \text{s.t.} \ |\beta_b| \leq M_b,\nonumber \\ 
& w_b \in \mathcal{H}_b \ \text{s.t.} \ \left \| w_b \right \|_{\mathcal{H}_b} \leq R_b, \ b \in \mathbb{N}_B \}
\end{align}

\noindent
\begin{theorem}
\label{theorem1}
Assume reproducing kernels of $\{ \mathcal{H}_b \}^B_{b=1} \ \text{s.t.} \ k_b(x,x') \leq r^2, \ \forall x, x' \in \mathcal{X}$. Then for a fixed value of $\rho > 0$, for any $\mathbf{f} \in \bar{\mathcal{F}}$, any $\{ \boldsymbol{\mu}_l \}^G_{l=1}, \ \boldsymbol{\mu}_l \in \mathbb{H}^B$ and any $\delta > 0$, with probability $1 - \delta$, it holds that:

\begin{align}
\label{eq:4-3}
er\left(\mathbf{f}, \boldsymbol{\mu}_l \right) \leq \hat{er} \left ( \mathbf{f}, \boldsymbol{\mu}_l \right) + \frac{2r}{\rho B \sqrt{N}}\sum_bR_b + \sqrt{\frac{\log \left( \frac{1}{\delta} \right)}{2 N}}
\end{align}

\noindent
where $er \left( \mathbf{f}, \boldsymbol{\mu}_l \right) \triangleq \frac{1}{B} \mathbb{E} \{ d \left( \sgn \left( \mathbf{f}(x), \boldsymbol{\mu}_l \right) \right) \}$, $l \in \mathbb{N}_G$ is the true label of $x \in \mathcal{X}$, $\hat{er}\left( \mathbf{f}, \boldsymbol{\mu}_l \right) \triangleq \frac{1}{N B} \sum_{n, b} Q_{\rho} \left( f_b(x_n)\mu_{l_n, b} \right)$, where $Q_{\rho}(u) \triangleq \min\left\{1, \max\left\{0, 1 - \frac{u}{\rho} \right\}  \right\}$.
\end{theorem}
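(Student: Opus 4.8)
The plan is to follow the standard recipe for margin-based Rademacher generalization bounds, using Lemma~\ref{lemma:1} to absorb the vector-valued nature of $\mathbf{f}$. The first step is to dominate the $0/1$ Hamming loss by the ramp loss $Q_\rho$. Since the normalized Hamming distance decomposes bitwise as $\frac{1}{B}d(\sgn\mathbf{f}(x),\boldsymbol{\mu}_l)=\frac{1}{B}\sum_b[\mu_{l,b}f_b(x)<0]$, and since $[u<0]\leq Q_\rho(u)$ for every $u\in\mathbb{R}$, I would introduce the loss class
\begin{align}
\mathcal{G}\triangleq\left\{(x,l)\mapsto \tfrac{1}{B}\sum_b Q_\rho\left(\mu_{l,b}f_b(x)\right): f_b\in\mathcal{F}_b,\ \boldsymbol{\mu}_l\in\mathbb{H}^B\right\},\nonumber
\end{align}
whose members take values in $[0,1]$. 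By construction $er(\mathbf{f},\boldsymbol{\mu}_l)\leq\mathbb{E}[g]$, while the empirical average of $g$ over the sample is exactly $\hat{er}(\mathbf{f},\boldsymbol{\mu}_l)$.

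The second step is to invoke the textbook uniform-deviation bound for $[0,1]$-valued function classes (McDiarmid's inequality followed by symmetrization): with probability at least $1-\delta$, every $g\in\mathcal{G}$ satisfies $\mathbb{E}[g]\leq\frac{1}{N}\sum_n g(x_n,l_n)+2\Re_N(\mathcal{G})+\sqrt{\log(1/\delta)/(2N)}$, where $\Re_N$ denotes the expected Rademacher complexity. This already supplies the confidence term of the theorem, so everything reduces to establishing $\Re_N(\mathcal{G})\leq\frac{r}{\rho B\sqrt{N}}\sum_b R_b$. Since the estimate I will derive is data-independent, it equally bounds the empirical complexity $\hat{\Re}_N(\mathcal{G})$, and I may therefore work with the empirical version throughout, which is the form in which Lemma~\ref{lemma:1} is stated.

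The third and central step is the contraction. Writing $g=\Psi\circ\tilde{\mathbf{f}}$ with $\tilde{\mathbf{f}}(x,l)\triangleq[\mu_{l,1}f_1(x),\ldots,\mu_{l,B}f_B(x)]^T$ and $\Psi(\mathbf{u})\triangleq\frac{1}{B}\sum_b Q_\rho(u_b)$, and noting that $Q_\rho$ is $\frac{1}{\rho}$-Lipschitz so that $\Psi$ is $\frac{1}{\rho B}$-Lipschitz w.r.t.\ $\left\|\cdot\right\|_1$, Lemma~\ref{lemma:1} gives $\hat{\Re}_N(\mathcal{G})\leq\frac{1}{\rho B}\hat{\Re}_N(\left\|\tilde{\mathcal{F}}\right\|_1)$. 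Because $|\mu_{l,b}|=1$, the inner norm collapses to $\left\|\tilde{\mathbf{f}}(x,l)\right\|_1=\sum_b|f_b(x)|$, which no longer depends on the codewords; subadditivity of the supremum then yields $\hat{\Re}_N(\left\|\tilde{\mathcal{F}}\right\|_1)\leq\sum_b\hat{\Re}_N(|\mathcal{F}_b|)$, and a scalar contraction (the map $t\mapsto|t|$ is $1$-Lipschitz) removes the absolute value, so $\hat{\Re}_N(|\mathcal{F}_b|)\leq\hat{\Re}_N(\mathcal{F}_b)$. Finally, the familiar kernel estimate $\hat{\Re}_N(\mathcal{F}_b)\leq\frac{R_b}{N}\sqrt{\sum_n k_b(x_n,x_n)}\leq\frac{R_b r}{\sqrt{N}}$, obtained via Cauchy--Schwarz together with $k_b\leq r^2$, closes the chain; multiplying by $\frac{2}{\rho B}$ assembles the stated complexity term.

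I expect the main obstacle to be this contraction step: correctly casting $g$ as a composition to which Lemma~\ref{lemma:1} applies, verifying the $\left\|\cdot\right\|_1$-Lipschitz constant of $\Psi$, and confirming that the codeword dependence genuinely vanishes so that the residual complexity factorizes over bits. A secondary technical point is the offset $\beta_b$: the per-bit complexity $\hat{\Re}_N(\mathcal{F}_b)$ carries an additive bias contribution of order $M_b/\sqrt{N}$ that the stated bound suppresses, so I would either absorb it into the leading term or argue it is of lower order, and I would also check that passing from the empirical to the expected Rademacher complexity leaves the data-independent estimate unchanged. The remaining manipulations are routine.
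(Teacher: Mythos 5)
Your proposal follows essentially the same route as the paper's proof: ramp-loss domination of the bitwise Hamming error, the standard Rademacher uniform-deviation bound (Theorem 3.1 of Mohri et al.), Lemma~\ref{lemma:1} applied with Lipschitz constant $\tfrac{1}{\rho B}$, elimination of the codewords via $|\mu_{l,b}|=1$, and the kernel trace bound $\sqrt{\mathrm{trace}\{K_b\}}\le r\sqrt{N}$ via Jensen's inequality; your per-bit factorization plus scalar contraction for the absolute value is merely a cleaner packaging of the paper's direct manipulation of the supremum. Your flagged concern about the offsets $\beta_b$ is well-founded, but it is a weakness shared with (and handled less candidly by) the paper itself, which in \eref{eq:4-7} drops the term $\sup_{|\beta_b|\le M_b}|\beta_b|\sum_n \sigma_n$ as if its expectation were zero, whereas it contributes a quantity of order $M_b/\sqrt{N}$ per bit --- the \emph{same} order as the retained term, so it cannot be argued to be lower order; strictly speaking, both arguments prove the stated bound only when the $M_b$ vanish, or with an additional $\tfrac{1}{\rho B \sqrt{N}}\sum_b M_b$-type term.
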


\begin{proof}
Notice that

\begin{align}
\label{eq:4-4}
& \frac{1}{B} d \left( \sgn \left( \mathbf{f}(x), \boldsymbol{\mu}_l \right) \right) = \frac{1}{B} \sum_b \left[ f_b(x) \mu_{l,b} < 0 \right] \leq \frac{1}{B} \sum_b Q_{\rho} \left( f_b(x) \mu_{l,b} \right) \nonumber \\
\Rightarrow & \  \mathbb{E} \left\{ \frac{1}{B} d\left( \sgn \left( \mathbf{f}(x), \boldsymbol{\mu}_l \right) \right) \right\} \leq \mathbb{E} \left\{ \frac{1}{B} \sum_b Q_{\rho} \left( f_b(x) \mu_{l,b} \right) \right\}
\end{align}

\noindent
Consider the set of functions

\begin{align}
\Psi \triangleq \{ \psi:(x,l) \mapsto \frac{1}{B} \sum_b Q_{\rho} \left( f_b(x) \mu_{l,b} \right), \mathbf{f} \in \bar{\mathcal{F}}, \mu_{l,b} \in \{ \pm 1 \}, l \in \mathbb{N}_G, b \in \mathbb{N}_B  \} \nonumber
\end{align}

\noindent
Then from Theorem 3.1 of \cite{Mohri2012} and \eref{eq:4-4}, $\forall \psi \in \Psi$, $\exists \delta > 0$, with probability at least $1 - \delta$, we have:

\begin{align}
\label{eq:4-5}
er \left( \mathbf{f}, \boldsymbol{\mu}_l \right) \leq \hat{er} \left( \mathbf{f}, \boldsymbol{\mu}_l \right) + 2 \Re_N(\Psi) + \sqrt{\frac{\log \left( \frac{1}{\delta} \right)}{2 N}}
\end{align}

\noindent
where $\Re_N \left( \Psi \right) $ is the Rademacher complexity of $\Psi$. From \lemmaref{lemma:1}, the following inequality between empirical Rademacher complexities is obtained

\begin{align}
\label{eq:4-6}
\hat{\Re}_N \left( \Psi \right) \leq \frac{1}{B \rho} \hat{\Re}_N \left( \left\| \bar{\mathcal{F}_{\mu}} \right\|_1 \right)
\end{align}

\noindent
where $\bar{\mathcal{F}_{\mu}} \triangleq \{ (x, l) \mapsto [f_1(x)\mu_{l,1},...,f_B(x)\mu_{l, B}]^T, \ f \in \bar{\mathcal{F}} \ and \ \mu_{l,b} \in \{ \pm 1\} \}$. The right side of \eref{eq:4-6} can be upper-bounded as follows

\begin{align}
\label{eq:4-7}
& \hat{\Re}_N \left( \left \| \bar{\mathcal{F}}_{\mu} \right \|_1 \right) \nonumber 
= \frac{1}{N} \Es{\boldsymbol{\sigma}} {\sup_{ \mathbf{f} \in \bar{\mathcal{F}}, \{ \boldsymbol{\mu}_{l_n} \} \in \mathbb{H}^B} \sum_n \sigma_n \sum_b |\mu_{l_n, b} f_b(x_n)|} \nonumber \\
&
 = \frac{1}{N} \Es{\boldsymbol{\sigma}} {\sup_{ \mathbf{f} \in \bar{\mathcal{F}}} \sum_n \sigma_n \sum_b | f_b(x_n)|} \nonumber \\
& = \frac{1}{N} \Es{\boldsymbol{\sigma}} {\sup_{ \omega_b \in \mathcal{H}_b, \left \| \omega_b \right \|_{\mathcal{H}_b} \leq R_b, | \beta_b | \leq M_b } \sum_n \sigma_n \sum_b |\left\langle w_b, \phi_b(x) \right\rangle_{\mathcal{H}_b} + \beta_b|} \nonumber \\
& = \frac{1}{N} \Es{\boldsymbol{\sigma}} {\sup_{ \omega_b \in \mathcal{H}_b, \left \| \omega_b \right \|_{\mathcal{H}_b} \leq R_b, | \beta_b | \leq M_b } \sum_n \sigma_n \sum_b |\left\langle w_b, \sgn(\beta_b) \phi_b(x) \right\rangle_{\mathcal{H}_b} + |\beta_b||} \nonumber \\
& 
= \frac{1}{N} \Es{\boldsymbol{\sigma}} {\sup_{| \beta_b | \leq M_b } \sum_b [R_b \sqrt{\boldsymbol{\sigma}^T K_b \boldsymbol{\sigma}} + |\beta_b| \sum_n \sigma_n]} \nonumber \\
&
= \frac{1}{N} \Es{\boldsymbol{\sigma}} { \sum_b R_b \sqrt{\boldsymbol{\sigma}^T K_b \boldsymbol{\sigma}} }  
\overset{\text{Jensen's Ineq.}}{\leq} \frac{1}{N} \sum_b R_b\sqrt{\Es{\boldsymbol{\sigma}}{\boldsymbol{\sigma}^T K_b \boldsymbol{\sigma}}} \nonumber \\
& = \frac{1}{N} \sum_b R_b \sqrt{\mathrm{trace}\{ K_b \}} \leq \frac{r}{\sqrt{N}} \sum_b R_b
\end{align}


\noindent
From \eref{eq:4-6} and \eref{eq:4-7} we obtain $\hat{\Re}_N\left( \Psi  \right) \leq \frac{r}{\rho B \sqrt{N}}\sum_b R_b$. Since $\Re_N \left( \Psi \right) \triangleq \Es{s}{\hat{\Re}_N \left( \Psi \right)}$, where $\mathbb{E}_s$ is the expectation over the samples, we have

\begin{align}
\label{eq:4-8}
\Re_N \left( \Psi \right) \leq \frac{r}{\rho B \sqrt{N}} \sum_b R_b 
\end{align}

\noindent
The final result is obtained by combining \eref{eq:4-5} and \eref{eq:4-8}.
\end{proof}

It can be observed that, minimizing the loss function of \pref{eq:6}, in essence, also reduces the bound of \eref{eq:4-3}. This tends to cluster same-class hash codes around the correct codeword. Since samples are classified according to the label of the codeword that is closest to the sample's hash code, this process may lead to good recognition rates, especially when the number of samples $N$ is high, in which case the bound becomes tighter.

\section{Experiments}
\label{sec:Experiments}

\subsection{Supervised Hash Learning Results}
\label{compareison}

\begin{figure*}[htb]
\vskip 0.2in
\begin{center}
\centerline{\includegraphics[width=\textwidth]{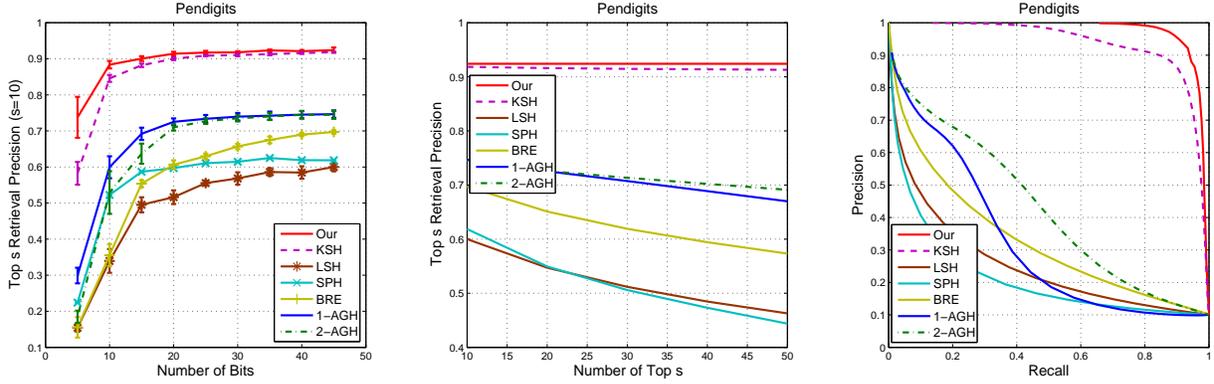}}
\caption{The top $s$ retrieval results and Precision-Recall curve on \textit{Pendigits} dataset over \ac{*SHL} and $6$ other hashing algorithms. (view in color)}
\label{figure2}
\end{center}
\vskip -0.2in
\end{figure*} 

In this section, we compare \ac{*SHL} to other state-of-the-art hashing algorithms: Kernel Supervised Learning (KSH) \cite{Liu2012}, Binary Reconstructive Embedding (BRE) \cite{Kulis2009a}, single-layer Anchor Graph Hashing (1-AGH) and its two-layer version (2-AGH) \cite{Liu2011}, Spectral Hashing (SPH) \cite{Weiss2008} and Locality-Sensitive Hashing (LSH) \cite{Gionis1999}. 

\begin{figure*}[htb]
	\vskip 0.2in
	\begin{center}
		\centerline{\includegraphics[width=\textwidth]{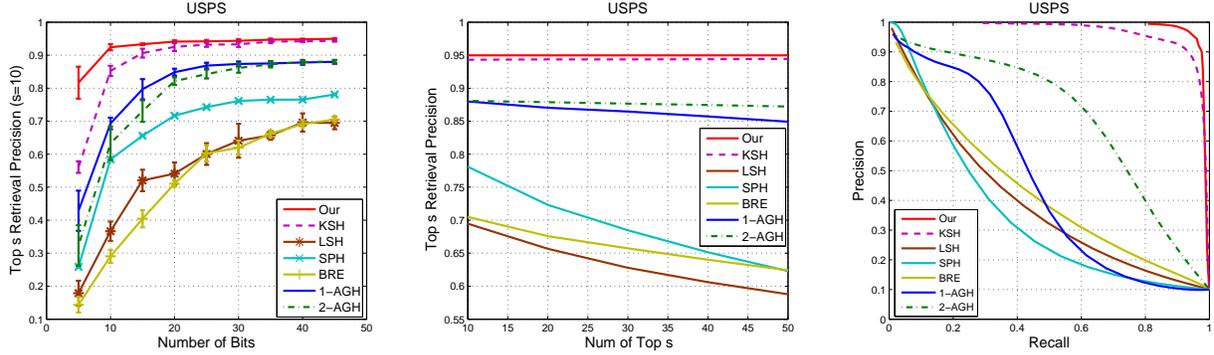}}
		\caption{The top $s$ retrieval results and Precision-Recall curve on \textit{USPS} dataset over \ac{*SHL} and $6$ other hashing algorithms. (view in color)}
		\label{figure3}
	\end{center}
	\vskip -0.2in
\end{figure*} 

\begin{figure*}[htb]
	\vskip 0.2in
	\begin{center}
		\centerline{\includegraphics[width=\textwidth]{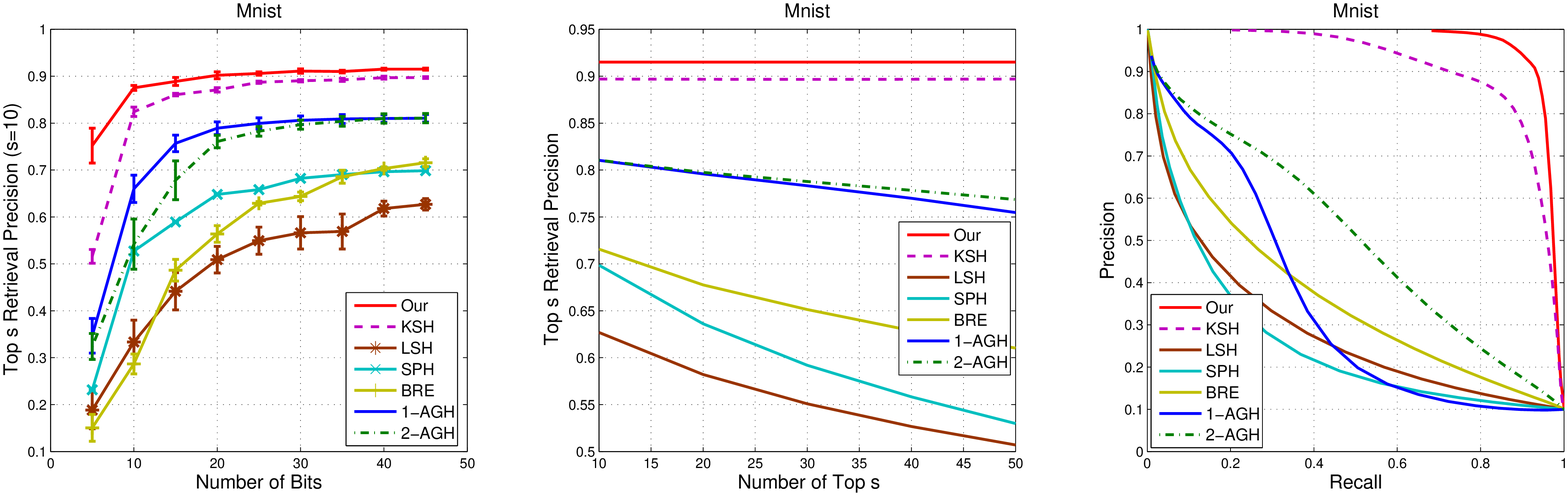}}
		\caption{The top $s$ retrieval results and Precision-Recall curve on \textit{Mnist} dataset over \ac{*SHL} and $6$ other hashing algorithms. (view in color)}
		\label{figure4}
	\end{center}
	\vskip -0.2in
\end{figure*} 

Five datasets were considered: \textit{Pendigits} and \textit{USPS} from the \textit{UCI Repository}, as well as \textit{Mnist}, \textit{PASCAL07} and \textit{CIFAR-10}. For \textit{Pendigits} ($10,992$ samples, $256$ features, $10$ classes), we randomly chose $3,000$ samples for training and the rest for testing; for \textit{USPS} ($9,298$ samples, $256$ features, $10$ classes), $3000$ were used for training and the remaining for testing; for \textit{Mnist} ($70,000$ samples, $784$ features, $10$ classes), $10,000$ for training and $60,000$ for testing; for \textit{CIFAR-10} ($60,000$ samples, $1,024$ features, $10$ classes), $10,000$ for training and the rest for testing; finally, for \textit{PASCAL07} ($6878$ samples, $1,024$ features after down-sampling the images, $10$ classes), $3,000$ for training and the rest for testing.

\begin{figure*}[htb]
	\vskip 0.2in
	\begin{center}
		\centerline{\includegraphics[width=\textwidth]{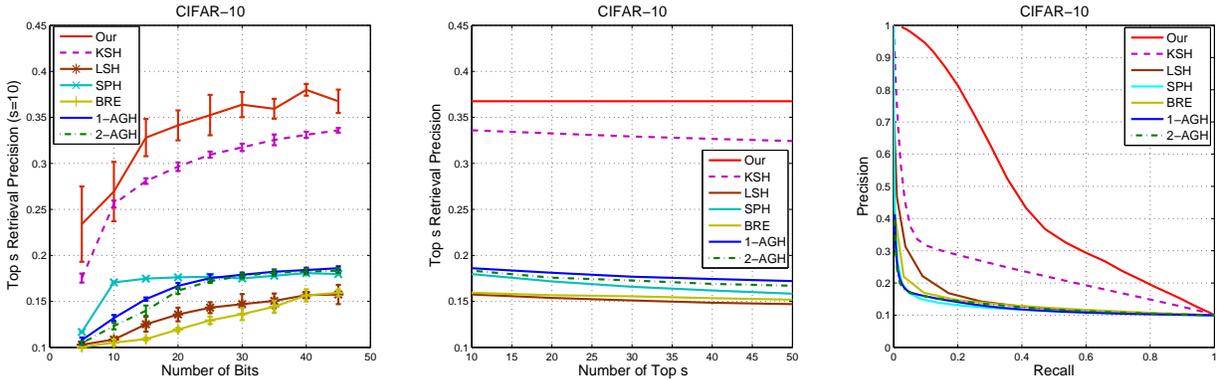}}
		\caption{The top $s$ retrieval results and Precision-Recall curve on \textit{CIFAR-10} dataset over \ac{*SHL} and $6$ other hashing algorithms. (view in color)}
		\label{figure5}
	\end{center}
	\vskip -0.2in
\end{figure*} 

\begin{figure*}[htb]
	\vskip 0.2in
	\begin{center}
		\centerline{\includegraphics[width=\textwidth]{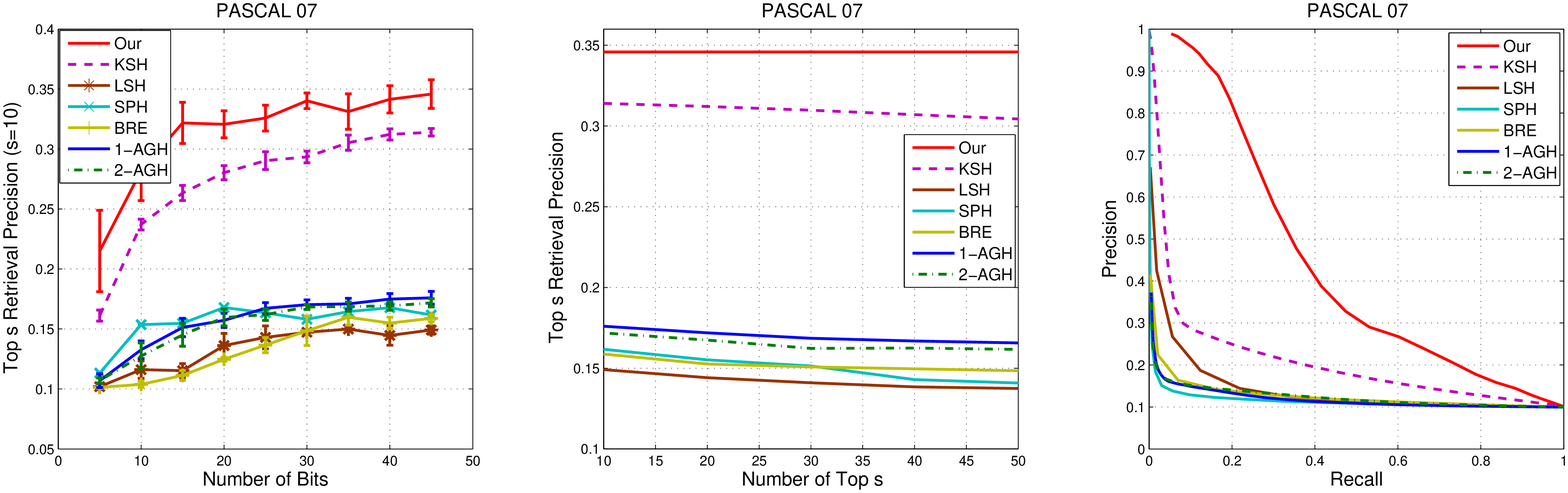}}
		\caption{The top $s$ retrieval results and Precision-Recall curve on \textit{PASCAL07} dataset over \ac{*SHL} and $6$ other hashing algorithms. (view in color)}
		\label{figure6}
	\end{center}
	\vskip -0.2in
\end{figure*} 

For all the algorithms used, average performances over $5$ runs are reported in terms of the following two criteria: (i) retrieval precision of $s$-closest hash codes of training samples; we used $s=\left\{ 10, 15, \ldots, 50 \right\}$. (ii) Precision-Recall (PR) curve, where retrieval precision and recall are computed for hash codes within a Hamming radius of $r \in \mathbb{N}_B$. 

The following \ac{*SHL} settings were used: \ac{SVM}'s parameter $C$ was set to $1000$; for \ac{MKL}, $11$ kernels were considered: $1$ normalized linear kernel, $1$ normalized polynomial kernel and $9$ Gaussian kernels. For the polynomial kernel, the bias was set to $1.0$ and its degree was chosen as $2$. For the bandwidth $\sigma$ of the Gaussian kernels the following values were used: $[2^{-7},2^{-5},2^{-3},2^{-1},1,2^1,2^3,2^5,2^7]$. Regarding the \ac{MKL} constraint set, a value of $p = 2$ was chosen. For the remaining approaches, namely KSH, SPH, AGH, BRE, parameter values were used according to recommendations found in their respective references. All obtained results are reported in \fref{figure2} through \fref{figure6}.

We clearly observe that \ac{*SHL} performs best among all the algorithms considered. For all the datasets, \ac{*SHL} achieves the highest top-$10$ retrieval precision. Especially for the non-digit datasets (\textit{CIFAR-10}, \textit{PASCAL07}), \ac{*SHL} achieves significantly better results. As for the PR-curve, \ac{*SHL} also yields the largest areas under the curve. Although noteworthy results were reported in \cite{Liu2012} for KSH, in our experiments \ac{*SHL} outperformed it across all datasets. Moreover, we observe that supervised hash learning algorithms, except BRE, perform better than unsupervised variants. BRE may need a longer bit length to achieve better performance as implied by \fref{figure2} and \fref{figure4}. Additionally, it is worth pointing out that \ac{*SHL} performed remarkably well for short big lengths across all datasets. 

It must be noted that AGH also yielded good results, compared with other unsupervised hashing algorithms, perhaps due to the anchor points it utilizes as side information to generate hash codes. With the exception of \ac{*SHL} and KSH, the remaining approaches exhibit poor performance for the non-digit datasets we considered. 

When varying the top-$s$ number between $10$ and $50$, once again with the exception of \ac{*SHL} and KSH, the performance of the remaining approaches deteriorated in terms of top-$s$ retrieval precision. KSH performs slightly worse, when $s$ increases, while \ac{*SHL}'s performance remains robust for \textit{CIFAR-10} and \textit{PSACAL07}. It is worth mentioning that the two-layer AGH exhibits better robustness than its single-layer version for datasets involving images of digits. Finally, \fref{figure7} shows some qualitative results for the \textit{CIFAR-10} dataset. In conclusion, in our experimentation, \ac{*SHL} exhibited superior performance for every code length we considered.

\begin{figure*}[htb]
	\vskip 0.2in
	\begin{center}
		\centerline{\includegraphics[width=\textwidth]{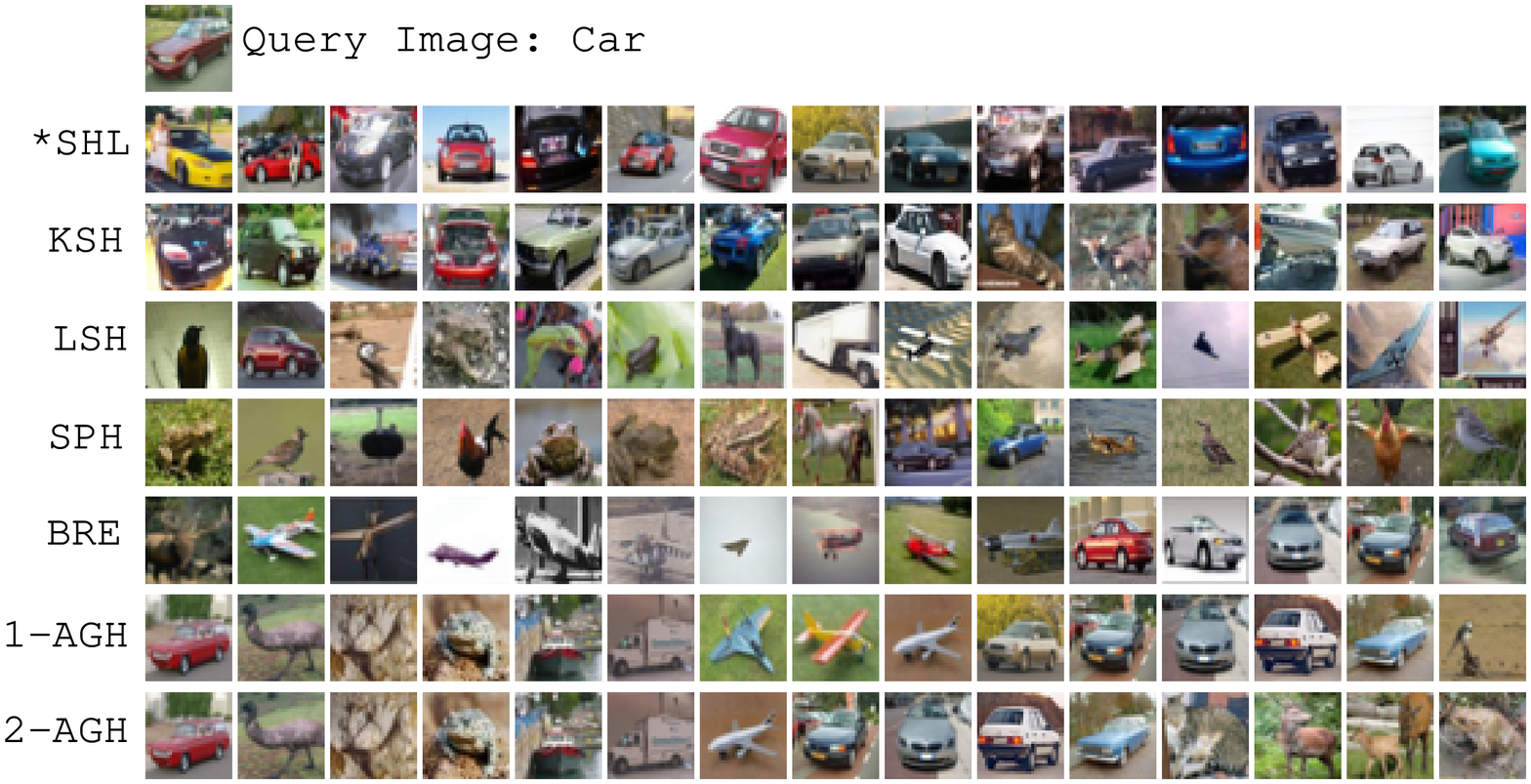}}
		\caption{Qualitative results on CIFAR-10. Query image is "Car". The remaining $15$ images for each row were retrieved using 45-bit binary codes generated by different hashing algorithms .}
		\label{figure7}
	\end{center}
	\vskip -0.2in
\end{figure*} 

\subsection{Transductive Hash Learning Results}
\label{toy}

\begin{figure}[ht]
\vskip 0.2in
\begin{center}
\centerline{\includegraphics[width=\textwidth]{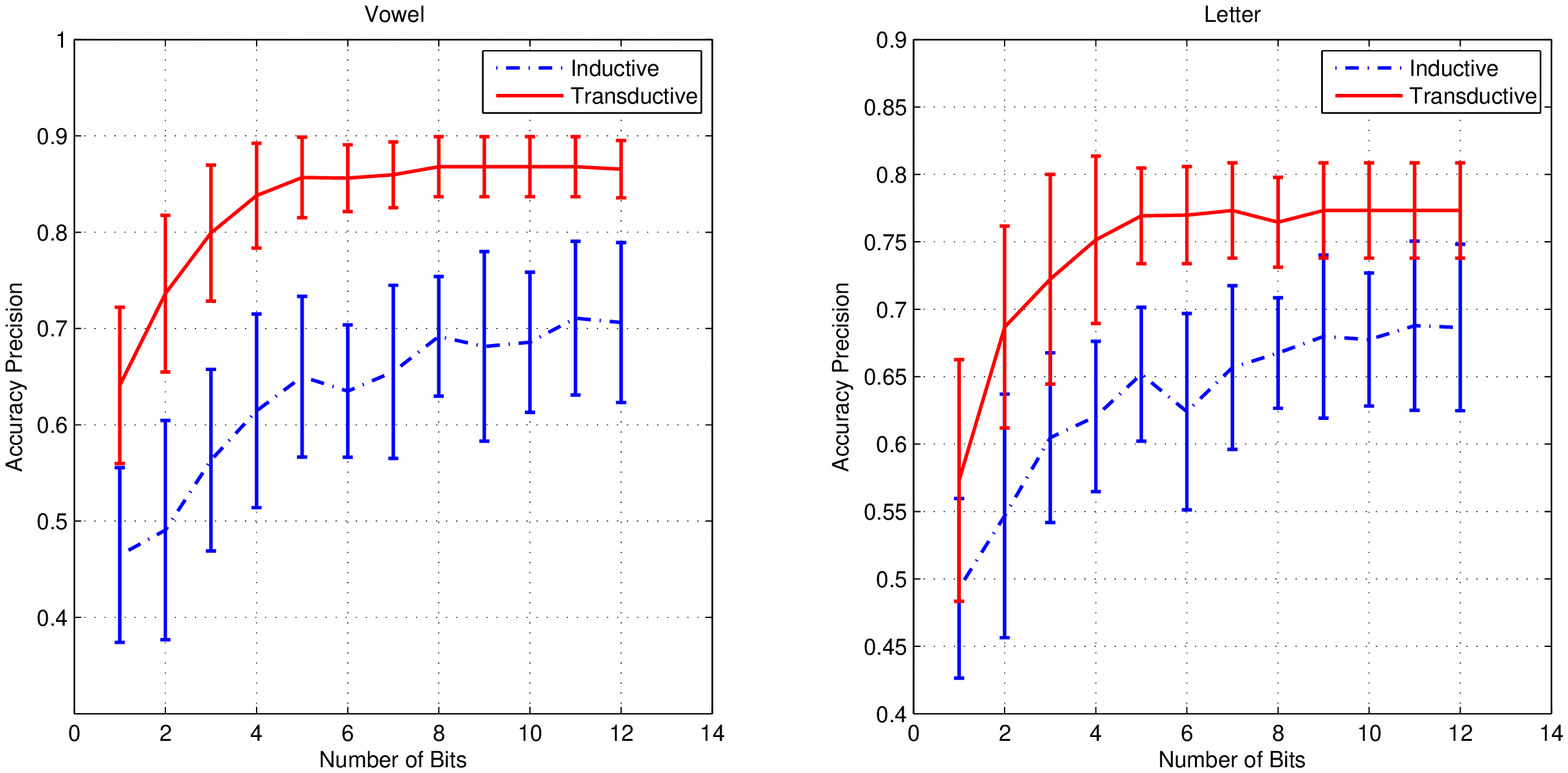}}
\caption{Accuracy results between Inductive and Transductive Learning.}
\label{figure1}
\end{center}
\vskip -0.2in
\end{figure}

As a proof of concept, in this section, we report a performance comparison of our framework, when used in an inductive versus a transductive \cite{Vapnik1998} mode. Note that, to the best of our knowledge, no other hash learning approaches to date accommodate transductive hash learning in a natural manner like \ac{*SHL}. For illustration purposes, we used the \textit{Vowel} and \textit{Letter} datasets. We randomly chose $330$ training and $220$ test samples for the \textit{Vowel} and $300$ training and $200$ test samples for the \textit{Letter}. Each scenario was run $20$ times and the code length ($B$) varied from $4$ to $15$ bits. The results are shown in \fref{figure1} and reveal the potential merits of the transductive \ac{*SHL} learning mode across a range of code lengths.

\acresetall

\section{Conclusions}
\label{sec:Conclusions}

In this paper we considered a novel hash learning framework with two main advantages. First, its \ac{MM}/\ac{BCD} training algorithm is efficient and simple to implement. Secondly, this framework is able to address supervised, unsupervised and, even, semi-supervised learning tasks in a unified fashion. In order to show the merits of the method, we performed a series of experiments involving $5$ benchmark datasets. In these experiments, a comparison between \ac{*SHL} to $6$ other state-of-the-art hashing methods shows \ac{*SHL} to be highly competitive.

\subsubsection{Acknowledgments}

Y. Huang was supported by a Trustee Fellowship provided by the Graduate College of the University of Central Florida. Additionally, M. Georgiopoulos acknowledges partial support from NSF grants No. 0806931, No. 0963146, No. 1200566, No. 1161228, and No. 1356233. Finally, G. C. Anagnostopoulos acknowledges partial support from NSF grant No. 1263011. Any opinions, findings, and conclusions or recommendations expressed in this material are those of the authors and do not necessarily reflect the views of the NSF.

\bibliographystyle{plainurl}
\bibliography{References}

\end{document}